\documentclass{article}

\usepackage{microtype}
\usepackage{graphicx}
\usepackage{subfigure}
\usepackage{booktabs} % for professional tables

\usepackage{enumitem}

\usepackage{hyperref}
\usepackage{wrapfig}
\PassOptionsToPackage{numbers, compress}{natbib}
\usepackage[preprint]{neurips_2023}

\usepackage{amsmath}
\usepackage{amssymb}
\usepackage{mathtools}
\usepackage{amsthm}

\usepackage{algorithm}
\usepackage{algorithmic}
\usepackage[capitalize,noabbrev]{cleveref}

\theoremstyle{plain}
\newtheorem{theorem}{Theorem}[section]

\theoremstyle{definition}
\newtheorem{definition}[theorem]{Definition}
\newtheorem{assumption}[theorem]{Assumption}
\theoremstyle{remark}

\usepackage[textsize=tiny]{todonotes}

\begin{document}
\newcommand{\figureFramework}{
\begin{figure*}[!t]
    \centering
    \includegraphics[width=\textwidth]{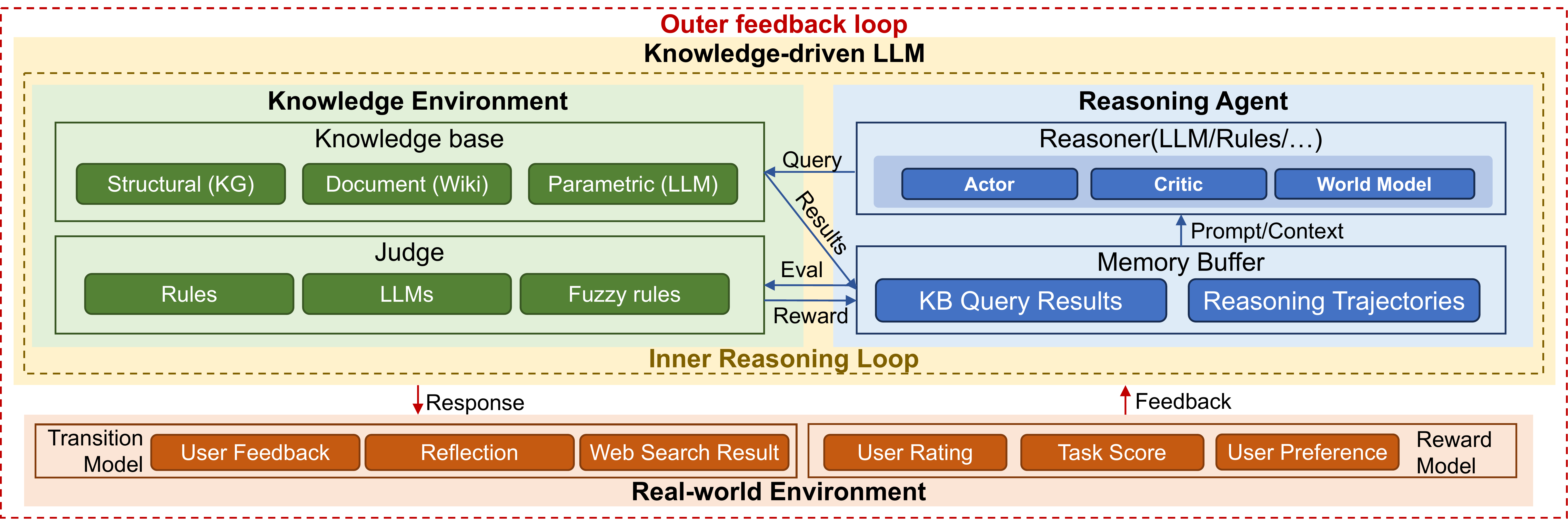}
    \caption{Knowledge-driven LLM agent framework. This framework consists of two levels of interaction. The inner level as shown in the left part corresponds the iterative reasoning process between LLM and external knowledge base. The outer level corresponds to the interaction between LLM agent and real world, where feedbacks is provided to help the agent improve itself.}
    \label{fig:2-level_framework}
\end{figure*}
}
%placeholder
\newcommand{\algInnerLoop}{
\begin{algorithm}
    \caption{The inner reasoning loop}
    \label{alg:inner_loop}
    \begin{algorithmic}[1]
        \item[\textbf{Input:}] User question $q$, max reasoning step $T$, reward threshold $R$
        \item[\textbf{Output:}] Response $a$
        \STATE $t \leftarrow 0, \mathcal{D}_t \leftarrow {q}, terminal \leftarrow False$
        \STATE $s_t \leftarrow \{q, \emptyset, \emptyset\}$
        \WHILE{$terminal \neq True$}
            \STATE $a^s_t, a^q_t \leftarrow \text{Reason}(\mathcal{D}_t)$
            \STATE $s^p_{t+1}, s^n_{t+1} \leftarrow \text{Select\_and\_query}(s_{t}, a^s_t)$
            \STATE $s_{t+1} \leftarrow (q, s^p_{t+1}, s^n_{t+1})$
            \STATE $r_t \leftarrow \text{Judge}(s_{t+1})$
            \STATE $\mathcal{D}_{t+1} \leftarrow \mathcal{D}_t \cup \{(s_t, a_t, r_t, s_{t+1})\}$
            \STATE $t \leftarrow t+1$
            \IF{$r_t \ge R \OR t \ge T$}
                \STATE $terminal \leftarrow True$
            \ENDIF
        \ENDWHILE
        \STATE \textbf{return} $\text{summarize}(\mathcal{D}_t)$
    \end{algorithmic}
\end{algorithm}
}

\newcommand{\algBayesianMBP}{
\begin{algorithm}[!t]
    \caption{Reason (LLM-driven Bayesian model-based planning with tree search)}
    \label{alg:bayesian_mbp}
    \begin{algorithmic}[1]
        \item[\textbf{Input:}] Memory buffer $\mathcal{D}_t$ at time $t$, model prompt template $pt_m$, critic prompt template $pt_c$, actor prompt template $pt_a$, planning lookahead $U$, beam width $W$, beam proposal number $N$
        \item[\textbf{Output:}] Action $a_t$
        \STATE \texttt{Model} $\leftarrow$ LLM prompted with $pt_m$ and $\mathcal{D}_t$
        \STATE \texttt{Critic} $\leftarrow$ LLM prompted with $pt_c$ and $\mathcal{D}_t$
        \STATE \texttt{Actor} $\leftarrow$ LLM prompted with $pt_a$ and $\mathcal{D}_t$
        % \STATE $B_0 \leftarrow \mathcal{D}_t$ \COMMENT{Init the lookhead planning memory buffer}
        \FOR{$i=0$ to $U-1$}
            \STATE \texttt{Actor} propose $N$ possible actions
            \STATE \texttt{Model} acts as a fake knowledge base that executes these actions and generate next states
            \STATE \texttt{Critic} evaluate these actions and select top-$W$ ones
            \STATE Update prompts for \texttt{Model}, \texttt{Actor}, \texttt{Critic} with the $W$ new trajectories
        \ENDFOR
        \STATE \textbf{return} The first action on the trajectory with the highest value among $W$ trajectories
    \end{algorithmic}
\end{algorithm}
}

\newcommand{\algAdaptedInnerLoop}{
\begin{algorithm}[!t]
    \caption{The adapted inner reasoning loop}
    \label{alg:adapted_inner_loop}
    \begin{algorithmic}[1]
        \item[\textbf{Input:}] User question $q$, max reasoning step $T$, reward threshold $R$
        \item[\textbf{Output:}] Response $a$
        \STATE $t \leftarrow 0, k \leftarrow 0, \mathcal{D}_t \leftarrow {q}, terminal \leftarrow False$
        \STATE $s_k \leftarrow \{q, \emptyset, \emptyset\}$
        \WHILE{$terminal \neq True$}
            \STATE $B_k \leftarrow \mathcal{D}_t$ \COMMENT{periodic update memory buffer for LLM}
            \REPEAT
                \STATE $a^s_t, a^q_t \leftarrow \text{Reason}(\mathcal{B}_t)$
                \STATE $s^p_{t+1}, s^n_{t+1} \leftarrow \text{Select\_and\_query}(s_{t}, a^s_t)$
                \STATE $s_{t+1} \leftarrow (q, s^p_{t+1}, s^n_{t+1})$
                \STATE $r_t \leftarrow \text{Judge}(s_{t+1})$
                \STATE $\mathcal{D}_{t+1} \leftarrow \mathcal{D}_t \cup \{(s_t, a_t, r_t, s_{t+1})\}$
                \STATE $t \leftarrow t+1$
                \IF{$r_t \ge R \OR t \ge T$}
                    \STATE $terminal \leftarrow True$
                \ENDIF
            \UNTIL{\texttt{Enough-NewInfo}}
            \STATE $k = k + 1$
        \ENDWHILE
        \STATE \textbf{return} $\text{summarize}(\mathcal{D}_t)$
    \end{algorithmic}
\end{algorithm}
}

\title{A Principled Framework for Knowledge-enhanced Large Language Model}
\author{%
  \text{Saizhuo Wang}$^{21}$\thanks{Work done during internship at IDEA Research}$\text{, Zhihan Liu}^{4} \text{, Zhaoran Wang}^4\text{, Jian Guo}^{13}$\thanks{Corresponding Author} \\
  $^1$IDEA Research, International Digital Economy Academy \\
  $^2$The Hong Kong University of Science and Technology\\
  $^3$The Hong Kong University of Science and Technology (Guangzhou) \\
  $^4$Northwestern University \\
  \texttt{swangeh@connect.ust.hk, zhihanliu2027@u.northwestern.edu, }\\
  \texttt{zhaoranwang@gmail.com, guojian@idea.edu.cn}
}
\maketitle

\begin{abstract}
Large Language Models (LLMs) are versatile, yet they often falter in tasks requiring deep and reliable reasoning due to issues like hallucinations, limiting their applicability in critical scenarios. This paper introduces a rigorously designed framework for creating LLMs that effectively anchor knowledge and employ a closed-loop reasoning process, enhancing their capability for in-depth analysis. We dissect the framework to illustrate the contribution of each component to the LLMs’ performance, offering a theoretical assurance of improved reasoning under well-defined assumptions.
% We also introduce new ways to test these abilities, including fresh datasets, experimental methods, and evaluation criteria. The success of our framework is demonstrated through practical tests.
\end{abstract}
\section{Introduction}\label{sec:intro}
Large language models (LLMs) have achieved notable success in natural language understanding and generation, laying the foundation for numerous LLM agents. These agents leverage LLMs as their cognitive nucleus, supplemented with various tools, memory, and reasoning mechanisms. This integration empowers them to conceptualize and act, thereby handling complex and pragmatic tasks. While proficient in routine tasks like answering questions and assisting with daily activities such as food orders, online purchases, and bookings, LLM agents often fall short in knowledge-dense tasks that require complex reasoning and inference. Such tasks demand intricate reasoning chains and high factual accuracy, challenges that LLMs face due to their reliance on precise prompt engineering, a constrained context window that cannot fully track extensive reasoning processes, and the tendency to generate unverified information, known as hallucinations.

Efforts to fortify factual accuracy have led to the integration of external knowledge bases, like knowledge graphs and document databases, and the enhancement of deep reasoning with iterative or recursive reasoning mechanisms. Nevertheless, these solutions tend to focus on individual aspects of the problem. Recognizing the need for a holistic approach, we introduce a cohesive, principle-driven framework to tackle the challenge of deep and responsible reasoning in LLMs. This framework systematically analyzes the intricacies of designing such agents, discussing the architectural decisions and what they mean for performance and functionality.

\section{Background}
\subsection{Knowledge-driven LLM and LLM Agent}
Current state-of-the-art large language models (LLMs) exhibit notable deficiencies. They are particularly prone to producing content that may not be factually accurate, commonly referred to as 'hallucinations' \cite{zhang_sirens_2023}. Additionally, updating the knowledge within these models is a resource-intensive process that typically requires retraining the entire model. To mitigate these limitations, integrating LLMs with external knowledge bases, such as a knowledge graph (KG), has emerged as a viable strategy. This integration enables LLMs to ground their responses in the verifiable facts stored in the KG, and updating knowledge becomes more manageable and cost-effective through modifications to the KG. A common method in knowledge-driven LLMs involves retrieving relevant information from the KG to provide context for the LLM's inference process. Leveraging LLMs' ability to learn in context \cite{dong_survey_2023}, the output is made to align with the provided knowledge context, enhancing consistency and reliability.

\subsection{LLM Inference as Implicit Bayesian Inference}\label{sec:bg_icl}
Recent research \citep{xie_explanation_2021, zhang_what_2023} suggests that the inference mechanisms of Transformer-based LLMs resemble implicit Bayesian inference. The process can be mathematically represented as:
\begin{equation}
p(y|x) = \int_\theta p(y|\theta, x) p(\theta|x) d\theta
\end{equation}
Here, $x$ denotes the given context, $y$ is the output from the LLM, and $\theta$ symbolizes the latent "concept" inferred from the context. This conceptual model delineates LLM's text generation process, which initially deduces the latent concept from the context ($p(\theta|x)$) and subsequently generates the output based on this deduced concept ($p(y|\theta, x)$). In the context of knowledge-driven LLMs, the "concept" encompasses the external knowledge derived from the knowledge base, which is then seamlessly integrated by the LLM with its pre-trained knowledge to formulate an accurate response.

\figureFramework

\section{Framework}\label{sec:framework}
Our framework introduces a two-tiered process for reasoning with LLM agents: the inner reasoning loop and the outer response-feedback loop. Initially, the inner loop addresses a user's natural language query by engaging the knowledge base and producing a response. Once a response is generated, the outer loop incorporates user feedback, which may suggest changes to the knowledge base, to refine the agent's future responses. This comprehensive approach ensures a dynamic evolution of the agent's capabilities, potentially extending across various domains that utilize iterative learning and feedback.
Our framework is primarily examined in the context of knowledge-based question-answering (KBQA) systems; however, its flexible architecture is also suitable for various domains that benefit from iterative learning and feedback loops. Additionally, although our framework is designed for iterative use, it can be adapted for one-shot applications, aligning with numerous methodologies such as those that inject knowledge into LLMs through fine-tuning.

\subsection{The Inner Reasoning Loop}\label{sec:framework_inner}
\algInnerLoop
The inner loop models the interaction between the reasoning agent—either a Large Language Model (LLM) or a rule-based system—and the knowledge base, which could take the form of a knowledge graph, a document trove, or another LLM. Each cycle involves the agent consulting the memory buffer, which begins with the user's question and is progressively enriched with data from the knowledge base, to identify the next query. When the knowledge base receives a query, it retrieves and supplies relevant information. Subsequently, a judge—implemented through rules or an LLM—assesses the accumulated information in the buffer to determine if it suffices to address the user's question. The loop concludes and formulates a response once it is determined that the gathered information is adequate or when a predefined stopping point is reached.

\paragraph{Components}
The inner loop is a sequence of exchanges between the \textbf{knowledge environment} and the \textbf{reasoning agent}. The \textbf{knowledge environment} is comprised of a \textit{knowledge base}, functioning as the transition model by processing queries and returning results, and a \textit{judge} module, which evaluates whether the reasoning agent's memory buffer contains adequate information to answer the user's question. The \textit{judge} may operate as either a prompted LLM or a set of predefined stopping rules.
The reasoning agent hosts a memory buffer that retains the history of reasoning trajectories and a reasoning module that devises actions based on the buffer's contents. At a given time step $ t $, the memory buffer $ \mathcal{D}_t $ encapsulates the reasoning paths derived from the LLM. The reasoning module, which may be an LLM or a suite of rules, inputs $ \mathcal{D}_t $ to generate an action.

\paragraph{MDP Formalization}

At the core of the inner reasoning loop is the interaction between reasoning agent and knowledge base, which we model as a Markov Decision Process (MDP). This MDP is defined by the tuple $ (\mathcal{S}, \mathcal{A}, T, r, \gamma) $, where each component is characterized as follows:

\begin{itemize}[leftmargin=1em]
  \item \textbf{State Space} $ \mathcal{S} $: The state at any time $ t $, denoted as $ s_t $, is a composite of the initial query $ q $, the cumulative reasoning path $ s^p_t $, and the newly acquired information $ s^n_t $ from the knowledge base. At the outset, $ s^p_{t_0} $ is an empty set, indicating the start of the reasoning process. $s_t$ that encapsulates all historical information explored up to time $t$, and we call it the information state \cite{liu_reason_2023}.
  
  \item \textbf{Action Space} $ \mathcal{A} $: The agent's actions, represented by $ a_t $, are twofold: $ a^s_t $ involves selecting informative segments from the current information state $ s^n_t $, and $ a^q_t $ concerns formulating queries to the knowledge base for further information.
  
  \item \textbf{Transition Function} $ T $: Defined as $ T(s'|s, a) $, this function represents the environment's dynamics, describing how the agent's selection of informative segments is a deterministic process, while querying the knowledge base is probabilistic, depending on the base's structure and contents.
  
  \item \textbf{Reward Function} $ r $: This function, $ r(s_t) $, assigns a quantitative value to the state $ s_t $ that reflects the adequacy of information within the state to address the initial query $ q $. It is normalized to the interval $[0, 1]$, where a score of 1 indicates a state with fully sufficient information.
  
  \item \textbf{Discount Factor} $ \gamma $: It defines the weight of future rewards in the agent's consideration, shaping the strategic depth of the reasoning process by emphasizing the importance of long-term outcomes. In practice, we use $\gamma \in (0, 1)$ to ensure that the value function $V^\pi_\theta(s)$ is bounded, which is important for analysis.
\end{itemize}

Given a policy $\pi: \mathcal{S} \mapsto \mathcal{A}$ and an environment parameterized by $\theta$, we define its value function and Q-function as follows:
\begin{equation}
    V^\pi_\theta(s) = \mathbb{E}\bigl[\sum_{t=0}^\infty \gamma^t r_\theta(s_t, a_t) | s_0=s\bigr], Q^\pi_\theta(s, a) = \mathbb{E}\bigl[\sum_{t=0}^\infty \gamma^t r_\theta(s_t, a_t)|s_0=s, a_0=a\bigr],
\end{equation}
where the expectation is taken over $a_t \sim \pi(s_t)$ and $s_{t+1} \sim T_\theta(\cdot | s_t, a_t)$ for all $t > 0$. Here $\theta$ can be considered as knowledge base contents. Specifically, $\theta$ affects the transition function $T_\theta(\cdot | s_t, a_t)$ by determining on the query results of a given query action, and the reward function $r_\theta(s)$ can be instantiated as an oracle based on knowledge base, such as a set of knowledge rules or a knowledge-enhanced LLM. The goal of the inner loop, therefore, is to learn a policy that maximizes $V^\pi_\theta(s)$ for all $s \in \mathcal{S}$ so as to find enough information for arbitrary user questions. 

% A knowledge environment, parametrized by $\theta^*$, determines the transition function $T_{\theta^*}$ and reward function $r_{\theta^*}$. The transition function determination can be understood as defining the query results by defining contents in KB. We assume the reward function is an oracle that can make precise evaluations on whether the given information is sufficient in answering the question based on knowledge base. In practice $r_{\theta^*}$ can be instantiated as a language model or a set of decision rules that make such judgement. The reasoning agent utilizes historical information that progressively increases as time goes by, and make actions accordingly. Here we can say that the reasoning agent induces a policy $\pi^t: \mathcal{S} \mapsto \mathcal{A}$. For each time $t$, $\pi^t$ can be different since it can refer to differnt information at different $t$.

\paragraph{Workflow}
The reasoning process within this MDP framework, as shown in Algorithm \ref{alg:inner_loop}, consists of the following iterative steps:

\begin{enumerate}[leftmargin=1em]
  \item \textbf{Reason}: The agent observes memory buffer $\mathcal{D}_t$, and generates an action $a_t$. The specific reasonoing process can differ across differnt reasoning agent choices, which we will discuss in detail in Sec. 4.
  \item \textbf{Select and Query}: The agent applies the action $a_t$ to the current information state, where $a^s_t$ filters relevant information and $a^q_t$ poses a new query to the knowledge base. The knowledge base, acting as the environment, responds by updating the state with new information, resulting in $s_{t+1}$. The forms of query action differ across different KB forms, being databse query for KGs, retrieval command for document bases, or prompts for LLMs.
  \item \textbf{Evaluate}: A reward $r(s_t, a_t)$ is assigned based on the informativeness of the new state in answering the query. If the judge is a set of rules, then the reward could be the extent of satisfaction. If the judge is a LLM, then the reward can be a relative ranking score or so, depending on how the judge LLM is prompted.
  \item \textbf{Terminate}: The reasoning loop concludes when a termination condition is met, either because the agent has obtained a state that sufficiently answers the query (reward threshold met) or when a predefined number of reasoning steps has been reached.
\end{enumerate}

The reasoning process unfolds as the agent, equipped with a prior belief $p(\theta)$ about the knowledge environment's parameters $\theta^*$, engages in a cycle of queries and responses. Initially, the agent faces significant uncertainty about the true parameter distribution $\theta^*$. With each interaction, the agent refines its understanding by updating the posterior distribution of $\theta$, which in turn informs more targeted and informed queries. As the agent accumulates knowledge, its posterior inference becomes increasingly accurate, more closely mirroring the true parameter distribution of the knowledge environment, thus enhancing performance. The agent's learning progress and policy refinement are quantified by Bayesian regret, which we expect to be sublinear in $T$, indicating that the agent's policy is progressively approaching optimality. The Bayesian regret is defined as:
\begin{equation}\label{equ:bayesian_regret}
\mathcal{R}(T) = \mathbb{E}_{\theta \sim p(\theta)} \bigl[\sum_{t=1}^T V^{\pi^*}_\theta (s_t) - V^{\pi^t}_\theta (s_t) \bigr],
\end{equation}
where the expectation is taken over the prior distribution of $\theta$, and $\pi^*$ is the optimal policy.
Here, $p(\theta)$ represents the agent's initial model of the environment, shaped by its prior experiences or pretraining. This prior serves as the baseline from which the agent begins its learning and adapts through experience.

\paragraph{Correspondence with Existing Methodologies}

Our framework integrates various established methods into four main categories, each unified by the concept of a reliable judge—implemented as either a sophisticated LLM or a set of rules—to ensure consistent assessment.

\begin{enumerate}[leftmargin=1em]
  \item \textbf{Knowledge Graph (KG) Predominant (KG-only)}: The knowledge base (KB) is instantiated as a KG, and the reasoning agent operates via a suite of (soft) logic rules or arithmetic operations. This configuration is akin to traditional symbolic KG reasoning methods or the embedding-based arithmetic operations found in neural KG reasoning paradigms.
  \item \textbf{LLM Predominant (LLM-only)}: Both the KB and the reasoning agent are represented by LLMs. Here, the reasoning agent solicits information from another LLM, which reflects the operational mechanism of existing methodologies such as the Chain-of-Thought (CoT) approach \citep{wei_chain--thought_2022}.
  \item \textbf{LLM $\oplus$ KG}: Here, a KG serves as the knowledge base while an LLM acts as the reasoning agent, typically concluding the reasoning in one iteration.
    \item \textbf{LLM $\otimes$ KG}: A hybrid model where a KG provides the knowledge base and an LLM acts as an iterative reasoning agent, working continuously until it satisfies the query's information requirements.
\end{enumerate}

These classifications highlight our framework's ability to incorporate and extend upon a diverse range of reasoning approaches, showcasing its modularity and adaptability within a cohesive system.

\subsection{The Outer Feedback Loop}
The outer loop functions as an interactive layer where the LLM agent engages with real-world inputs. The agent processes user queries and produces answers through reasoned analysis. Feedback from this interaction informs subsequent updates to the agent's knowledge, refining its future performance.
This external process can be either a single interaction or a continuous, iterative dialogue, tailored to the application's requirements. In a knowledge-based question-answering (KBQA) scenario, the user presents a question, and the agent responds. If the interaction is one-time, the user's feedback may simply be a rating that concludes the session. In an iterative model, the user's feedback includes both a rating and constructive comments, prompting the agent to refine its subsequent responses.

The iterative nature of this loop can also be modeled as an MDP, analogous to the inner loop, which allows for a structured analysis of the feedback mechanism. Here, the combined knowledge base and reasoning agent form the prior for model parameter distribution, which evolves as it assimilates feedback from the environment. This process enhances the LLM’s understanding of the real world, gradually honing its performance as feedback accumulates.
Practically, feedback integration can vary, ranging from direct edits to the knowledge base, fine-tuning of the LLM, or adjustments to the prompts used for context. These methods represent design choices that will be explored further in our discussion.
It should be noted that in certain applications, such as one-off question answering, the LLM's opportunity for learning from user feedback is limited, as the interaction with the real-world environment does not allow for ongoing adjustments.

\section{Analysis}\label{sec:analysis}
In this section we analyze the effect of design components in our framework. We will theoretically prove the query efficiency of the inner reasoning loop and qualitatively analyze the design choices of other components in this framework.

\algBayesianMBP
\algAdaptedInnerLoop
\subsection{Provable query efficiency}
Our framework predicates its convergence towards accurate responses on high-quality knowledge bases, iteratively reasoned. This iterative reasoning is exemplified by the adapted inner reasoning loop in Algorithm \ref{alg:adapted_inner_loop}, which underpins our framework's query efficiency. Our analysis and proof follows the RAFA framework \citep{liu_reason_2023}, and we will briefly recite it here.

Our analysis focus on the inner loop with LLM $\otimes$ KG paradigm, where the \textbf{Reason} (Sec. \ref{sec:framework_inner}) step employs Bayesian model-based planning, with example provided in Algorithm \ref{alg:bayesian_mbp}. Updates to the LLM’s context within this configuration are periodic and significant, transitioning the agent's understanding from a prior environmental parameters distribution $p(\theta)$ to a refined posterior $p_t(\theta|\mathcal{D}_t)$ with each iteration. This process, hinged on following two critical assumptions, enhances the model-based planning and is assessed through Bayesian regret.
\begin{assumption}\label{assump:prior_same}
The distribution of the pretraining data for the reasoner LLM mirrors the knowledge distribution that underlies the knowledge base.
\end{assumption}

\begin{assumption}\label{assump:oracle_judge}
The inner loop judge delivers unerring reward signals about information sufficiency.
\end{assumption}
% \vspace{-0.5em}

Assumption \ref{assump:prior_same} ensures no discrepancy in the inner loop between the prior distribution provided by LLM and the knowledge distribution. Assumption \ref{assump:oracle_judge} ensures the alignment between knowledge base and judge.
With these assumptions and an optimal planning algorithm in place, Bayesian regret is governed by the precision of the posterior distribution $p_t(\theta|\mathcal{D}_t)$.
Although Bayesian regret inherently accumulates over time, it does so at a sublinear rate, indicating that the average regret per iteration approaches zero, reflecting the policy's gradual improvement towards optimality.
This relationship between information assimilation and model precision underscores the agent's consistent advancement towards the ideal policy.
We will delve into planning optimality and information gains, concluding with the implications on the sublinear nature of Bayesian regret.

\subsubsection{Planning Optimality}
We will assume that LLM reasoning is performing poesterior inference of model parameter, and then prove the optimality of our model-based planning algorithm (example in Algorithm \ref{alg:bayesian_mbp}). Combining these two provides guarantee for our planning algorithm, and offloads policy suboptimality to information gap as we will analyze in Sec. \ref{sec:analysis_info}.

\begin{assumption}\label{assump:bayesian_inference}
    LLM-estimated knowledge model parameter a posteriori $p_t(\theta)$ essentially follows $\mathbb{P}(\cdot | \mathcal{D}_t)$
\end{assumption}
Assumption \ref{assump:bayesian_inference} says that LLM performs implicit Bayesian inference on knowledge model parameters, which echos Sec. \ref{sec:bg_icl} and is verified in \cite{xie_explanation_2021}.

\begin{definition}
    ($\epsilon$-Optimality) Under an environment parametrized by $\theta$, a policy $\pi$ satisfies $\epsilon$-optimality if the following condition holds with high probability
    \begin{equation}\label{equ:eps_optim}
        \max_{s} V^{\text{PL}^*(\theta)}_\theta(s) - V^\pi_\theta(s) \le \epsilon,
    \end{equation}
\end{definition}
where $\text{PL}^*(\theta)$ denotes the optimal planner under environment $\theta$.
For planning algorithms, $\epsilon$ is usually related to the lookhead horizon $U$, since as $U$ increases, the performance gap decreases. This essentially tells us that we can increase lookahead length to enhance planning optimality, which is essentially about trading inference cost/latency with accuracy. Planning algorithm such as value iteration, tree search (Algorithm \ref{alg:bayesian_mbp}), and MCTS can usually assure $\epsilon$-optimality with proper lookheads.

Given that the planning algorithm is based on the posterior estimation of model parameter $\theta_t$, which depends on effective information up to time $t$, we say that the reasoning LLM induces a corresponding policy $\pi^t$ at time step $t$ that evolves as information accumulates.

\subsubsection{Information Gain}\label{sec:analysis_info}
Consider the entropy of the posterior distribution $p_t(\theta)$ at time t:
\begin{equation}
    H_t = - \int p_t(\theta) \log p_t(\theta) d\theta
\end{equation}
$H_t$ is expected to decrease as $t$ increases, meaning that the uncertainty in environment estimation reduces as more observations is attained. Meanwhile, the uncertainty in posterior distribution will be reflected on the estimation error of value function. To reflect this process, we introduce the idea of information coefficient $\Gamma_{t^\dagger}(\delta)$ \citep{abbasi-yadkori_bayesian_2015}, which is the minimum value that makes the following inequality holds for all $t \in \{t^\dagger+1, ..., T-1\}$ with probability at least $1 - \delta$, if $H_{t^\dagger} - H_t \leq \log2$:
\begin{equation}\label{equ:info_coeff}
    | (r_{\theta^*} - r_{\theta_t})(s_t, a_t) + ((P_{\theta^*} - P_{\theta_t})V)(s_t, a_t) | \le \Gamma_{t^\dagger}(\delta) \sqrt{I(\theta;\xi_{t+1}|D_t)},
\end{equation}
where $I(\theta;\xi_{t+1}|D_t) = H_{t+1} - H_t$ is the information gain over  after observing the $t+1$-th step $(s_t, a_t, r_t, s_{t+1})$
Equation \ref{equ:info_coeff} essentially depicts that as the new information gets smaller with more explorations, meaning that the exploration has already gained a lot of information and getting new information is hard, the bound of the estimation error for value function gets smaller. As long as $\Gamma_{t^\dagger}(\delta)$ is finite, the estimation error can be bounded.

To ensure the stability of policy updates, in Algorithm \ref{alg:adapted_inner_loop} the context buffer for the reasoning LLM is periodically updated only when enough new information compared with the previous checkpoint is obtained. In the following analysis, we will adopt the same \texttt{Enough-NewInfo} condition as in \cite{liu_reason_2023}, which is $H_k - H_t \ge log2$, indicating there is at least 1-bit of new information obtained.

\subsubsection{Sublinearity in Bayesian Regret}
With the above discussion, we invoke conclusions in RAFA, showing the sublinearity of $\mathcal{R}(T)$.  Our proof follows Appdx.C in \cite{liu_reason_2023}. Here we will briefly recite it with the proof sketch. For detailed proof, readers may refer to \cite{liu_reason_2023}.
\begin{theorem} \label{thm:regret_bound}
    Suppose the reasoning algorithm is $\epsilon$-optimal and $\max_s V_\theta(s) \le L$, under Assumption \ref{assump:prior_same}. With $\epsilon = O(\frac{1}{\sqrt{T}})$ and $\delta = O(\frac{1}{\sqrt{T}})$, the Bayesian regret of the reasoning agent satisfies $\mathcal{R}(T) = O(\sqrt{T})$
\end{theorem}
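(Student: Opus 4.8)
The plan is to follow the posterior-sampling (Thompson-sampling) style regret analysis, decomposing each per-step regret into a planning-suboptimality term controlled by $\epsilon$ and a model-misidentification term controlled by the information gain. First I would invoke the posterior-sampling lemma: under Assumptions \ref{assump:prior_same} and \ref{assump:bayesian_inference}, conditioned on the memory buffer $\mathcal{D}_t$ the true parameter $\theta^*$ and the LLM's posterior sample $\theta_t$ are identically distributed. Since the optimal value is a deterministic function of the parameter, this yields $\mathbb{E}\bigl[V^{\pi^*(\theta^*)}_{\theta^*}(s_t)\mid\mathcal{D}_t\bigr] = \mathbb{E}\bigl[V^{\pi^*(\theta_t)}_{\theta_t}(s_t)\mid\mathcal{D}_t\bigr]$. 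This swap is the engine of the proof, because it replaces the unknown true optimum with a quantity evaluated under the sampled model that the agent actually plans against.

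Next I would insert and subtract $V^{\pi^t}_{\theta_t}(s_t)$ to split each summand of $\mathcal{R}(T)$ into (i) $V^{\pi^*(\theta_t)}_{\theta_t}(s_t) - V^{\pi^t}_{\theta_t}(s_t)$, which is at most $\epsilon$ by $\epsilon$-optimality of the planner under its own model $\theta_t$; and (ii) $V^{\pi^t}_{\theta_t}(s_t) - V^{\pi^t}_{\theta^*}(s_t)$, the value of the same policy under the sampled versus the true model. For term (ii) I would apply the value-difference (simulation) lemma to rewrite it as a discounted sum of per-step Bellman residuals of exactly the form $(r_{\theta^*}-r_{\theta_t})(s_t,a_t) + ((P_{\theta^*}-P_{\theta_t})V)(s_t,a_t)$ appearing in \eqref{equ:info_coeff}. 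Summing term (i) over $t$ gives $T\epsilon = O(\sqrt{T})$ once $\epsilon = O(1/\sqrt{T})$.

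For the aggregate of term (ii) I would control each Bellman residual by $\Gamma_{t^\dagger}(\delta)\sqrt{I(\theta;\xi_{t+1}\mid\mathcal{D}_t)}$ on the high-probability event via \eqref{equ:info_coeff}, then apply Cauchy--Schwarz across $t$ to obtain $\sum_t \sqrt{I_t} \le \sqrt{T\sum_t I_t}$. The sum of one-step information gains telescopes into the total entropy reduction $H_0 - H_T$, which is bounded, so this contributes $O(\sqrt{T})$. On the complementary failure event of probability $\delta$ I would bound the regret crudely by $L$ per step, contributing $O(\delta L T) = O(\sqrt{T})$ under $\delta = O(1/\sqrt{T})$; the logarithmic dependence of $\Gamma_{t^\dagger}(\delta)$ on $1/\delta$ only inflates the bound by a benign $\sqrt{\log T}$ factor.

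The main obstacle, I expect, is the bookkeeping forced by the periodic context updates in Algorithm~\ref{alg:adapted_inner_loop}: the policy $\pi^t$ is held fixed within an epoch and refreshed only at checkpoints $k$ where the \texttt{Enough-NewInfo} condition $H_k - H_t \ge \log 2$ fires. I would therefore organize the sum over $t$ into epochs, verify that within each epoch the precondition $H_{t^\dagger} - H_t \le \log 2$ of \eqref{equ:info_coeff} holds so that the correct per-epoch information coefficient applies, and confirm that the number of epochs stays controlled, since releasing at least one bit per epoch caps them at $O(H_0)$. Reconciling the epoch index $k$ with the step index $t$ in the telescoping argument, while keeping the posterior-sampling swap valid at each epoch boundary, is the delicate part; the remaining estimates are routine and I would defer them to the RAFA analysis of \cite{liu_reason_2023}.
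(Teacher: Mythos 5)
Your proposal is correct and takes essentially the same route as the paper's proof, which itself recites the RAFA analysis of \cite{liu_reason_2023}: the same split into an $\epsilon T$ planning-suboptimality term and a model-estimation gap controlled via the information coefficient of Eq.~\ref{equ:info_coeff}, Cauchy--Schwarz over the one-step information gains telescoping to $H_0 - H_T$, the $L\delta T$ failure-event term, and the epoch bookkeeping induced by \texttt{Enough-NewInfo} (your ``delicate'' epoch-boundary reconciliation is precisely the paper's term (D), the value inconsistency bounded in Eq.~\ref{eq:b}). If anything, you make explicit the posterior-sampling swap that the paper leaves implicit in Assumption \ref{assump:bayesian_inference} when it replaces $V^{\pi^*}_{\theta}$ by $V^{\text{PL}^*(\theta^k)}_{\theta^k}$ in Eq.~\ref{equ:regret_decomp_1}.
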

\begin{proof}
   
    According to Eq. C.1 in \cite{liu_reason_2023}, the Bayesian regret in Eq. \ref{equ:bayesian_regret} can be decomposed as:
    \begin{align} \label{equ:regret_decomp_1}
        \mathcal{R}(T) & = \mathbb{E}_{\theta \sim p(\theta)}[\sum_{t=1}^{T} V^{\pi^*}_\theta (s_t) - V^{\pi^t}_\theta (s_t)] \notag \\
        & = \mathbb{E}_{\theta \sim p(\theta)}[\sum_{k=1}^{K} \sum_{t=t_k}^{t_{k+1}-1} V^{\text{PL}^*(\theta)}_{\theta} (s_t) - V^{\pi^t}_\theta (s_t)] \notag \\
        & = \underbrace{\mathbb{E}_{\theta \sim p(\theta)}[\sum_{k=1}^{K} \sum_{t=t_k}^{t_{k+1}-1} V^{\text{PL}^*(\theta^k)}_{\theta^k} (s_t) - V^{\pi^k}_{\theta^k} (s_t)]}_{\text{term (A): Policy suboptimality}} + \underbrace{\mathbb{E}_{\theta \sim p(\theta)}[\sum_{k=1}^{K} \sum_{t=t_k}^{t_{k+1}-1} V^{\pi^t}_{\theta^k} (s_t) - V^{\pi^t}_\theta (s_t)]}_{\text{term(B): Model estimation gap}}
    \end{align}
    Term(A) in Eq. \ref{equ:regret_decomp_1} recalls the definition of $\epsilon$-optimality (eq. \ref{equ:eps_optim}), leading to:
    \begin{equation}
        \text{term(A)} \le \epsilon T.
    \end{equation}
    Since LLM context is updated periodically, in the following we denote $\theta_k = \theta_t$ and $\pi^k = \pi^t$ for all $t \in \{t_k, t_k+1, ... t_{k+1}-1\}$.
    For term(B) in Eq. \ref{equ:regret_decomp_1}, according to lemma C.1 in \cite{liu_reason_2023}, it can be decomposed into the information gain and value inconsistency, expressed as Eq. C.6 in \cite{liu_reason_2023}:
    \begin{align}\label{equ:regret_decomp_2}
       &\frac{1-\gamma}{\gamma}\cdot\mathbb{E}_{\theta \sim p(\theta)}\Bigl[\sum_{k=0}^{K-1}\mathbb{E}_{\pi^k}\Bigl[\sum_{t=t_k}^{t_{k+1}-1} V_{\theta^k}^{\pi^k}(s_t)- V_{\theta}^{\pi^k}(s_t)\Bigr]\Bigr]\notag\\
       &\quad= \underbrace{\mathbb{E}_{\theta \sim p(\theta)}\Bigl[\sum_{k=0}^{K-1}\mathbb{E}_{\pi^k}\Bigl[\sum_{t=t_k}^{t_{k+1}-1} ({B}_{\theta^k}V^{\pi^k}_{\theta^k})(s_t,a_t) - ({B}_{\theta}V^{\pi^k}_{\theta^k})(s_t,a_t)\Bigr]\Bigr]}_{\text{term (C): information gain}}\notag\\
       &\quad\qquad +\underbrace{\mathbb{E}_{\theta \sim p(\theta)}\Bigl[\sum_{k=0}^{K-1}\mathbb{E}_{\pi^k}\Bigl[\bigl(V_{\theta^k}^{\pi^k}(s_{t_{k+1}}) - V^{\pi^k}_{\theta}(s_{t_{k+1}})\bigr)-\bigl(V_{\theta^k}^{\pi^k}(s_{t_k}) - V^{\pi^k}_{\theta}(s_{t_k}) \bigr) \Bigr]\Bigr]}_{\text{term (D): value inconsistency}},
    \end{align}
    where $B_\theta$ is the Bellman operator satisfying
    \begin{equation}
        B_\theta V^\pi_\theta(s, a) = r_\theta(s, a) + \gamma P_\theta V^\pi_\theta(s, a).
    \end{equation}
    For term(C) in Eq. \ref{equ:regret_decomp_2}, recall Eq. \ref{equ:info_coeff} and Eq. C.10 in \cite{liu_reason_2023}, we have:
    \begin{equation}
        \text{term(C)} \le \sup_{t^\dagger<T}\Gamma_{t^\dagger}(\delta)\cdot \mathbb{E}\bigl[\sqrt{ T (H_0 - H_{T}) }\bigr]+L\delta T
    \end{equation}
    For term(D) in Eq. \ref{equ:regret_decomp_2}, according to Eq. C.11 in \cite{liu_reason_2023}, we have
    \begin{equation}
        \text{term (D)}\le {(4L/{\log2})\cdot\mathbb{E}[H_0 - H_{T}]}+4L\label{eq:b}.
    \end{equation}
    So Bayesian regret is bounded by (Theorem 4.4 in \cite{liu_reason_2023}):
    \begin{equation}\label{equ:regret_order}
        \mathcal{R}(T) = O(\frac{\gamma \cdot (\sup_{t^\dagger<T}\Gamma_{t^\dagger}(\delta) \mathbb{E}\bigl[\sqrt{ (H_0 - H_{T}) }\bigr] \sqrt{T}  + L \delta T + L \mathbb{E}\bigl[\sqrt{ (H_0 - H_{T}) }\bigr])}{1-\gamma}  + \epsilon T)
    \end{equation}
    According to Eq. \ref{equ:regret_order}, if we have $\epsilon = O(\frac{1}{\sqrt{T}})$ and $\delta = O(\frac{1}{\sqrt{T}})$, then the whole term is essentially $O(\sqrt{T}).$
\end{proof}

As the number of iterations increases, the policy is expected to approach optimality. This is due to the posterior distribution increasingly concentrating around the true environmental parameter $\theta$ with each new observation, which in turn refines the model's estimation. With a more accurate model and the assumption of planning optimality, we predict the policy will eventually align with the optimal one. This convergence is further assured by Assumption \ref{assump:oracle_judge}, which posits that the judge module provides perfect feedback, thus enabling the agent to arrive at the correct answers. The $\mathbb{E}[\sqrt{H_0 - H_T}]$ term on the right-hand side of Equation \ref{equ:regret_order} indicates that when the information gain is small after exploration, suggesting that the underlying knowledge environment is simple, the Bayesian regret bound becomes tighter. Besides, it's important to note that this analysis does not account for discrepancies between the knowledge environment and the actual environment, an aspect relevant to the outer loop and designated for future investigation.

When considering other paradigms as outlined in Section \ref{sec:framework_inner}, one-shot methods are limited by a lack of iterative feedback and hence retain a constant uncertainty in model parameter estimation, precluding convergence. Similarly, iterative methods that do not engage in Bayesian inference—those unable to refine estimations using data from the knowledge base—are also devoid of performance guarantees due to static uncertainty. Therefore, it is the iterative reasoning algorithms that integrate progressively expanding information which are endowed with provable efficiency. The specific design choices within this paradigm will be addressed in subsequent discussions.

\subsection{Observation Quality}
In the context of a knowledge environment considered as noisy, the data's quality is crucial for the reasoning agent's Bayesian inference and the resulting precision in model parameters. Assessing the influence of the knowledge base type—LLM versus KG—on reasoning, we note from an information-theoretic view that KGs, with their structured representation of knowledge, typically produce less uncertain responses than LLMs. The latter introduces greater uncertainty through its training on diverse datasets and in-context generation of responses. This disparity in response entropy is attributed to the inherent design differences between the deterministic retrieval from KGs and the probabilistic output from LLMs. When observational data is sparse, it usually exacerbates the uncertainty in the estimation of model parameters, leading to policies that may be suboptimal during model-based planning.

\subsection{LLM as Reasoning Agent}
Using an LLM for Bayesian model-based planning is key to achieving convergence, as its simultaneous functioning as a model, actor, and critic is essential for the planning's effectiveness. In contrast, planning methods that omit Bayesian inference will struggle with persistent uncertainty, obstructing the path to convergence. Additionally, employing a basic language model instead of an LLM as the prior requires a greater number of observations for the model to refine its parameter estimations and approach accuracy.

\subsection{Real-world Feedback Incorporation}
User feedback is a vital form of real-world data that enhances the knowledge environment, guiding it to refine its reasoning capabilities. Knowledge Graphs (KGs), with their structured format, adeptly utilize this detailed feedback. They can directly incorporate corrections or new insights into their interconnected nodes, aligning closely with user-provided data.
On the other hand, LLMs face more complexity when integrating feedback. As LLMs are trained on extensive datasets, incorporating new user feedback involves a retraining or fine-tuning process that might not preserve the feedback's full detail due to the data processing inequality. This principle suggests that with each transformation step, such as integrating feedback, there's a risk of losing some useful information. For LLMs, this means the nuanced feedback may get diluted as it's merged with pre-existing data patterns, potentially introducing noise and lessening the impact of the updates.
While both KGs and LLMs can improve from user interactions, KGs are typically better at preserving the accuracy of user feedback. LLMs may inadvertently alter feedback due to their complex update mechanisms. Consequently, KGs often provide a more dependable approach for incorporating new user insights into the knowledge environment.

\newpage
\bibliography{references}

\newpage
\appendix
\onecolumn

\end{document}